\documentclass{article}

\usepackage{arxiv}
\usepackage[utf8]{inputenc} 
\usepackage[T1]{fontenc}    
\usepackage{hyperref}       
\usepackage{url}            
\usepackage{booktabs}       
\usepackage{amsfonts}       
\usepackage{nicefrac}       
\usepackage{microtype}      
\usepackage{lipsum}		
\usepackage{graphicx}
\usepackage{natbib}
\usepackage{doi}
\usepackage{wrapfig}
\usepackage{algorithm}
\usepackage[noend]{algpseudocode}
\usepackage[bottom]{footmisc}
\newtheorem{theorem}{Theorem}

\newenvironment{proof}{\paragraph{Proof.}}{\hfill$\square$}

\title{Quantization in Spiking Neural Networks}

\author{ 
{
\hspace{1mm}Bernhard A.~Moser}\thanks{double affiliation: Software Competence Center Hagenberg (SCCH), 4232 Hagenberg, Austria} \\
	Institute of Signal Processing \\
	Johannes Kepler University of Linz\\
	\texttt{bernhard.moser@\{scch.at,jku.at\}} 
	\And
	{\hspace{1mm}Michael Lunglmayr} \\
	Institute of Signal Processing\\
	Johannes Kepler University of Linz, Austria\\
	\texttt{michael.lunglmayr@jku.at} 
	}

\hypersetup{
pdftitle={A template for the arxiv style},
pdfsubject={q-bio.NC, q-bio.QM},
pdfauthor={Bernhard A.~Moser, Michael Lunglmayr},
pdfkeywords={Leaky Integrate-and-Fire, Neuromorphic Computing, Spiking Neuron Model, Threshold-based Sampling},
}

\begin{document}
\maketitle

\begin{abstract}
In spiking neural networks (SNN), at each node, an incoming sequence of weighted Dirac pulses is converted into an output sequence of weighted Dirac pulses by a leaky-integrate-and-fire (LIF) neuron model based on spike aggregation and thresholding.
We show that this mapping can be understood as a quantization operator and state a corresponding 
formula for the quantization error by means of the Alexiewicz norm.
This analysis has implications for rethinking reinitialization in the LIF model, leading to the proposal of
{\it reset-to-mod} as a modulo-based reset variant.
\end{abstract}

\keywords{Leaky-Integrate-and-Fire (LIF) Neuron \and Spiking Neural Networks (SNN) 
\and Re-Initialization  \and Quantization \and Error Propagation \and Alexiewicz Norm}

\section{Introduction}
Though its simplicity, the leaky integrate-and-fire (LIF) neuron model is widely spread in neuromorphic computing and computational neuroscience~\cite{bookGerstner2014,Nunes2022}. In contrast to more biophysically realistic models, such as the Hodgkin-Huxley model, LIF is a middle ground to capture essential features of bio-inspired time-based information processing, and simple enough to be applicable from the point of view of neuromorphic engineering. In particular, in biology spikes show varying shapes and its generation and re-initialization follows a complex dynamics as revealed by the Hodgkin-Huxley differential equations.
In contrast, LIF is based on the following major idealizations. 
(1), it neglects shape information and idealizes spikes as shapeless impulses, (2), it is based on the assumption that the triggering process is realized by a comparison with a given threshold-level, and, (3), the process of triggering and re-initialization acts instantaneously. 

In this paper we take these assumptions as axioms for a mathematical operator that maps a sequence of weighted Dirac impulses to another one.
We ask about properties of this mapping and find that it can be understood as a quantization operation in the space of spike trains.
The standard model of quantization of a single number is integer truncation. 
That is, a given real number $x = n + r$, $n \in \mathbb{Z}$, $r \in (-1,1)$ is mapped to the integer $n$ and the quantization error is given by $r$. 
The resulting mapping $q: \mathbb{R} \rightarrow \mathbb{Z}$ can also be considered from a geometric point of view. 
Consider a grid of vertexes of a tessellation of polytopes induced by the unit ball of the maximum norm, $\|.\|_{\infty}$.
This way, a given point $x$ is element of such a polytope $P$ and mapped to that vertex of $P$ that is closest to the origin.

 
In an abstract sense, we will show that LIF can be understood in the same way.
Instead of the maximum norm, $\|.\|_{\infty}$, the space $\mathbb{S}$ of spike trains is tessellated by means of the unit balls of the Alexiewicz norm 
$\|.\|_{A, \alpha}$, which will be outlined next. Then we get the formula 
$\|\mbox{LIF}_{\alpha, \vartheta}(\eta) - \eta\|_{A, \alpha} < \vartheta$,
where $\mbox{LIF}_{\alpha, \vartheta}$ denotes the LIF mapping based on the leaky parameter $\alpha$, the threshold $\vartheta>0$ and 
the input spike train $\eta \in \mathbb{S}$.

The paper is outlined as follows. In Section~\ref{s:LIF} we reformulate the LIF model, taking different re-initialization variants into account. 
In Section~\ref{s:Alex} we recall the Alexiewicz norm $\|.\|_{A}$, generalize it to leaky variants $\|.\|_{A, \alpha}$ and 
state the main theorem. Section~\ref{s:Evaluations} presents evaluations considering the discussed reinitialization variants, based on Python code available at~\url{https://github.com/LunglmayrMoser/AlexSNN}.

\section{LIF Model and Re-Initialization Variants}
\label{s:LIF}
The flow of information in a LIF model is outlined in Fig.~\ref{fig:Superposition}.
\begin{figure}
	\centering
	\includegraphics[width=0.8\textwidth]{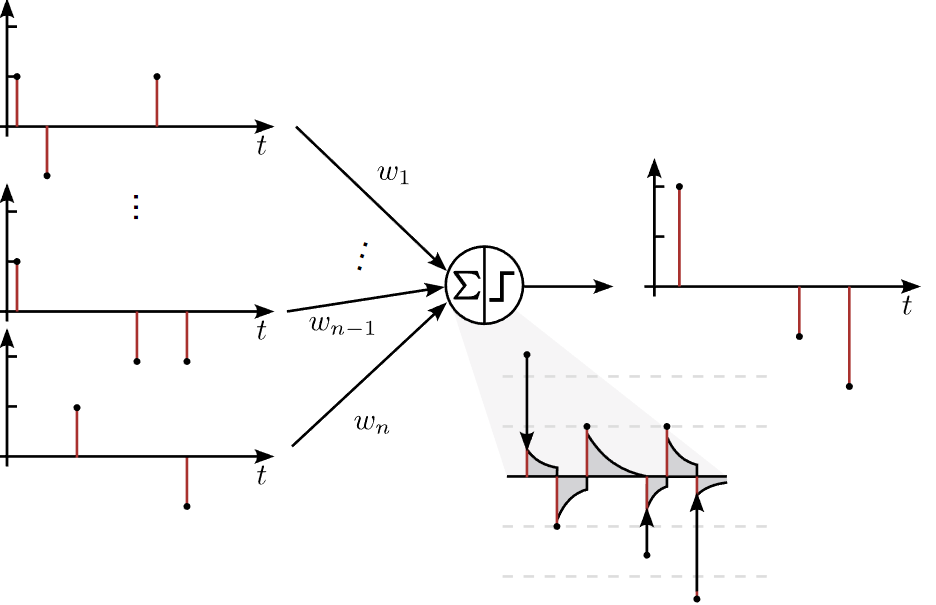}
		\caption{Illustration of steps of information processing by a LIF model. 
		The weighted superposition of incoming spike trains can result in spike amplitudes that can exceed the threshold many times.
				}
	\label{fig:Superposition}
\end{figure}
Arriving from several input channels, spike events trigger the dynamics of 
the membrane potential by a weighted linear superposition across the channels and an integration with a leaky term in time.
The LIF model idealizes the superposition (and, also that of re-initialization) by an instantaneous aggregation. 
Due to this building process, the incoming spike events can equivalently be represented by a single channel of spikes with 
amplitudes resulting from a weighted sum of simultaneous spikes across the input channels.
This way we have to take spike amplitudes of virtually arbitrary magnitude into account. 
In the literature, this aspect is pointed out as a limitation of the LIF model, 
since it deviates from biology in this respect~\cite{bookGerstner2014}. 
As an idealized mathematical model, however, it has its justification, but then 
this effect must also be taken into account in the re-initialization step.
While in the context of LIF there are discussed basically two modes of re-initialization (synonymously, {\it reset}), 
namely {\it reset-to-zero} and {\it reset-by-subtraction}, in Eqn.~(\ref{eq:reset}) we will propose a third one, 
{\it reset-to-mod}, to be mathematically consequent regarding the outlined issue of instantaneous superposition.
According to~\cite{snnTorch2021}, {\it reset-to-zero} means that the potential is reinitialized to zero after firing, while 
{\it reset-by-subtraction} subtracts the $\vartheta$-potential $u_{\vartheta}$ from the membrane's potential that triggers the firing event.
As a third variant we introduce {\it reset-to-mod}, which can be understood as 
instantaneously cascaded application of {\it reset-by-subtraction} according to the factor by which the membrane's potential 
exceeds the threshold which results in a modulo computation. 

For an input spike train $\eta_{\tiny in}(t) = \sum_i a_i \delta_{t_i}(t) $
the mapping $\sum_i b_i \, \delta_{s_i} = \mbox{LIF}_{\vartheta, \alpha}(\eta_{\tiny in})$
is recursively given by 
$
s_{i+1} =
\inf\left\{s\geq s_i
: \,
\left| u_{\vartheta, \alpha}(s_i, s)  \right| 
\geq \vartheta\right\}, 
$
where 
\begin{equation}
\label{eq:u}
u_{\vartheta, \alpha}(t_{\mbox{\tiny event}}, t) := 
 \int^t_{t_{\mbox{\tiny event}}}
	e^{-\alpha (\tau - t_{\mbox{\tiny event}})} 
	\left(		\eta_{\tiny in}(\tau) - \mbox{discharge}(t_{\mbox{\tiny event}}, \tau) \right) d\tau \nonumber
\end{equation}
models the dynamic change of the neuron membrane's potential after an input spike event at time $t_{\mbox{\tiny event}}$.
The process of triggering an output spike is actually a charge-discharge event 
\begin{equation}
\label{eq:reset}
\mbox{discharge}(t_i, \tau) := 
\left\{
\begin{array}[2]{lcl}
	a_i\delta_{t_i}(\tau) & \ldots & \mbox{for {\it reset-to-zero}}, \\
	\mbox{sgn}(a_i)\, \vartheta\, \delta_{t_i}(\tau)  & \ldots & \mbox{for {\it reset-by-subtraction}},  \\
	q(a_i/\vartheta)\, \vartheta\, \delta_{t_i}(\tau) & \ldots & \mbox{for {\it reset-to-mod}}
\end{array}
\right.
\end{equation}
where $\mbox{sgn}(x) \in \{-1,0,1\}$ is the signum and $q(x):=\mbox{sgn}(x)\max\{k \in \mathbb{Z}: k \leq |x|\}$ is the integer truncation quantization function.

\begin{figure}
	\centering
		\includegraphics[width=1\textwidth]{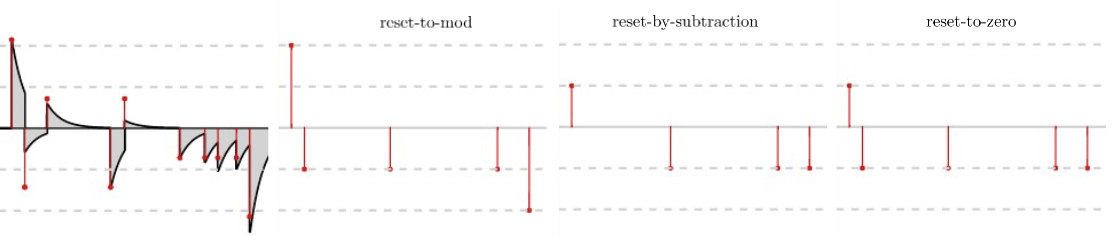}
  		\caption{Example of input spike train with different LIF outputs depending on the re-initialization variant.}
	\label{fig:LIF}
\end{figure}

The {\it reset-by-subtraction} mode can be understood as compensation event so that the net voltage balance of the 
spiking event equals zero, i.e., in case of an output spike with amplitude $\vartheta>0$ the membrane is actually
discharged by this amount. Accordingly, though not always made clear in the literature, see for example~\cite{snnTorch2021},
this assumption has the subtle consequence that an increase of the membrane potential $u$ by multiples $q(u/\vartheta)$ of the threshold level 
$\vartheta$ results in a discharge of the membrane's potential by the same amount, 
that is $q(u/\vartheta)\vartheta$. See Fig.~\ref{fig:LIF} for an illustration and example.

\section{Alexiewicz Norm and Spike Train Quantization}
\label{s:Alex}
For $\eta = \sum_i a_i \delta_{t_i} \in \mathbb{S}$ we introduce the measure
$\|\eta\|_{A, \alpha}:=\max_n\left| \sum_{j=1}^n a_j e^{-\alpha (t_n - t_j)}\right|$, 
which satisfies the axioms of a norm on the vector space $\mathbb{S}$.
For $\alpha = 0$ we obtain the {\it Alexiewicz} norm~\cite{Alexiewicz1948},
which is topologically equivalent to the discrepancy norm~\cite{Moser12UnitBall}. 
The Alexiewicz norm reveals the quantization character of the LIF model.
\begin{theorem}[{\it \bf reset-to-mod} LIF as $\|.\|_{A,\alpha}$-Quantization]
\label{th:quantization}
Given a LIF neuron model with  {\it reset-to-mod}, the threshold $\vartheta>0$, the leaky parameter $\alpha \in [0,\infty]$ 
and the spike train $\eta \in \mathbb{S}$ with amplitudes $a_i \in \mathbb{R}$. 
Then, $\mbox{LIF}_{\vartheta, \alpha}(\eta)$ is a 
$\vartheta$-quantization of $\eta$, i.e., the resulting spike amplitudes 
are multiples of $\vartheta$, where the quantization error is bounded by
\begin{equation}
\label{eq:quantization}
\|\mbox{LIF}_{\vartheta, \alpha}(\eta) - \eta\|_{A, \alpha} < \vartheta. 
\end{equation}
\end{theorem}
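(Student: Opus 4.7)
The plan is to reduce $\|\mathrm{LIF}_{\vartheta,\alpha}(\eta)-\eta\|_{A,\alpha}$ to a simple quantity that is controlled by the defining invariant of \emph{reset-to-mod}.

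Write $\eta=\sum_i a_i\delta_{t_i}$. Since the leaky flow between input events strictly decreases $|u_{\vartheta,\alpha}|$, the membrane can only attain $\vartheta$ in magnitude at an input time, so under \emph{reset-to-mod} (whose cascade is instantaneous) I would represent the output as $\mathrm{LIF}_{\vartheta,\alpha}(\eta)=\sum_i b_i\delta_{t_i}$ with $b_i\in\vartheta\mathbb{Z}$, allowing $b_i=0$. Unfolding Eq.~(\ref{eq:u}) with the modulo discharge then gives the scalar recursion
\[
\tilde U_i := e^{-\alpha(t_i-t_{i-1})} U_{i-1} + a_i,\qquad b_i := q(\tilde U_i/\vartheta)\,\vartheta,\qquad U_i := \tilde U_i - b_i,
\]
with $U_0:=0$, where $\tilde U_i$ and $U_i$ are the pre- and post-reset membrane potentials at $t_i$. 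The elementary truncation property $|x-q(x/\vartheta)\vartheta|<\vartheta$ yields the crucial invariant $|U_i|<\vartheta$ for every $i$, which is the mathematical essence of \emph{reset-to-mod}.

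The central step is the telescoping identity
\[
\sum_{j=1}^{n}(b_j-a_j)\,e^{-\alpha(t_n-t_j)} \;=\; -\,U_n,
\]
which I would prove by induction on $n$. The base case follows from $U_1=a_1-b_1$. For the inductive step, factor $e^{-\alpha(t_{n+1}-t_n)}$ out of the first $n$ summands, apply the hypothesis to obtain $-e^{-\alpha(t_{n+1}-t_n)}U_n$, add the new term $b_{n+1}-a_{n+1}$, and recognise the result as $-(\tilde U_{n+1}-b_{n+1})=-U_{n+1}$.

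Inserting this identity into the definition of $\|\cdot\|_{A,\alpha}$ applied to $\mathrm{LIF}_{\vartheta,\alpha}(\eta)-\eta=\sum_i(b_i-a_i)\delta_{t_i}$ gives $\|\mathrm{LIF}_{\vartheta,\alpha}(\eta)-\eta\|_{A,\alpha}=\max_n|U_n|<\vartheta$, and $b_i\in\vartheta\mathbb{Z}$ is built into the construction, settling both assertions. The main obstacle is conceptual: spotting the exact correspondence between the exponentially weighted partial sums defining $\|\cdot\|_{A,\alpha}$ and the leaky accumulation of the residual potential $U_n$. Once this is seen, the proof collapses to a one-line induction plus a truncation estimate. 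A minor subtlety is the boundary case $\alpha=\infty$, where the exponentials degenerate and $U_n$ reduces to the one-step quantiser residual of $a_n$; this can be handled separately or recovered as the limit $\alpha\to\infty$ of the finite-$\alpha$ identity.
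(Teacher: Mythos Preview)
Your proposal is correct and follows essentially the same approach as the paper: both identify the exponentially weighted partial sums defining $\|\cdot\|_{A,\alpha}$ with the post-reset membrane residual (your $U_n$, the paper's $\Delta_{i_k}$ and intermediate values), establish this by induction, and bound the residual via the truncation property of $q$. Your formulation is slightly cleaner in that you treat firing and non-firing input times uniformly by allowing $b_i=0$, whereas the paper distinguishes firing times $t_{i_k}$ (handled by the inductive identity $\hat{s}_{i_k}=\Delta_{i_k}$) from the intermediate times (handled via the sub-threshold condition~(\ref{eq:thcond})), but the underlying argument is the same.
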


\begin{proof}
\begin{wrapfigure}{r}{5.5cm}
  \includegraphics[width=5cm]{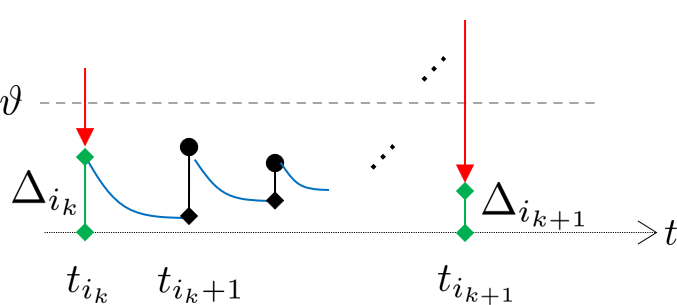}
	\caption{Illustration of Eqn.~(\ref{eq:quantDeltaRecursion}). The red arrows indicate reset by {\it reset-to-mod}.}
 		\label{fig:quantProof}
\end{wrapfigure}

First of all, we introduce the following operation $\oplus$, which is associative and can be handled with like the usual addition if adjacent elements $a_i$ from a spike train $\eta = \sum_i a_i \delta_{t_i}$ are aggregated:
\begin{equation}
\label{eq:pseudoaddition}
a_i \oplus a_{i+1} :=  e^{-\alpha (t_{i+1}- t_{i})} a_i  +  a_{i+1}.
\end{equation}
This way we get a simpler notation when aggregating convolutions, e.g., 
\[
a_i \oplus \ldots \oplus a_j = \sum_{k=i}^j e^{-\alpha (t_{j}- t_{k})} a_k.
\] 

For the discrete version we re-define 
$a_{i_k} \oplus a_{i_{k+1}} :=  \beta^{(i_{k+1}- i_{k})} a_{i_k}  +  a_{i_{k+1}}$, if $i_{k}$ and $i_{k+1}$ refer to adjacent spikes at time $i_k$, resp. $i_{k+1}$. Further, we denote $q[z]:= \mbox{sgn}(z)\, \left\lfloor |z|\right\rfloor$ which is the ordinary quantization due to integer truncation, e.g. $q[1.8] = 1$, $q[-1.8]=-1$, where $\left\lfloor |z| \right\rfloor = \max\{n\in \mathbb{N}_0:\, n \leq |z|\}$.  

After fixing notation let us consider a spike train $\eta = \sum_j a_i \delta_{t_{i}}$. 
Without loss of generality we may assume that $\vartheta = 1$.
We have to show that $\|\mbox{LIF}_{1, \alpha}(\eta)  - \eta\|_{A, \alpha} < 1$, which is equivalent 
to the discrete condition that $\forall n: \max_n \left|\sum_{i=1}^n \hat{a}_i \right| < 1$,
where  $\eta - \mbox{LIF}_{1, \alpha}(\eta) = \sum_i \hat{a}_i \delta_{t_i}$.
Set $\hat{s}_k := \hat{a}_0 \oplus \cdots \oplus \hat{a}_k$. We have to show that
$\max_{k}|\hat{s}_k| < 1$. The proof is based on induction and leads the problem back to the standard quantization by truncation.

Suppose that at time $t_{i_{k-1}}$ after re-initialization by {\it reset-to-mod} we get the residuum $\Delta_{i_{k-1}}$ as membrane potential that is the starting point for the integration after $t_{i_{k-1}}$. 
Note that 
\begin{equation}
\mbox{LIF}_{1, \alpha}(\eta)|_{t = t_k} = q(\Delta_{i_{k-1}} \oplus a_{i_{k-1}+1} \cdots \oplus a_{i_{k}}) \nonumber
\end{equation}
Then, as illustrated in Fig.~\ref{fig:quantProof} the residuum $\Delta_{i_{k}}$ at the next triggering event $t_{i_{k}}$ is obtained by the equation
\begin{equation}
\label{eq:quantDeltaRecursion}
\Delta_{i_{k}} = \Delta_{i_{k-1}} \oplus a_{i_{k-1}+1} \oplus \ldots \oplus  a_{i_k} - q[\Delta_{i_{k-1}} \oplus \ldots \oplus  a_{i_k}].
\end{equation}
Note that due to the thresholding condition of LIF we have
\begin{equation}
\label{eq:thcond}
|\Delta_{i_{k}} \oplus a_{i_{k}+1} \oplus \ldots \oplus  a_j| < 1
\end{equation}
for $j \in \{i_{k}+1, \ldots, i_{k+1}-1\}$.
For the $\oplus$-sums $\hat{s}_{i_k}$ we have
\begin{equation}
\label{eq:ahat}
\hat{s}_{i_{k+1}} = \hat{s}_{i_{k}} \oplus a_{i_{k}+1} \cdots a_{i_{k+1}-1} \oplus 
\left( 
a_{i_{k+1}} - q[\Delta_{i_{k}} \oplus a_{i_{k}+1} \oplus \ldots \oplus a_{i_{k+1}}]
\right).
\end{equation}

Note that $\hat{s}_0 = \Delta_{i_0}= a_0 - q[a_0]$, then for induction we assume that up to index $k$ to have
\begin{equation}
\label{eq:quantInduction}
\hat{s}_{i_k} = \Delta_{i_k}.
\end{equation}

Now, using (\ref{eq:quantInduction}), Equation~(\ref{eq:ahat}) gives
\begin{eqnarray}
\label{eq:induction}
\hat{s}_{i_{k+1}} & = & \Delta_{i_k} \oplus a_{i_{k}+1} \oplus \ldots \oplus 
a_{i_{k+1}-1} \oplus 
\left( 
a_{i_{k+1}} - q[\Delta_{i_{k}} \oplus a_{i_{k+1}} \oplus \ldots \oplus a_{i_{k+1}}]
\right) \nonumber \\
& = & \Delta_{i_k} \oplus a_{i_{k}+1} \oplus \ldots \oplus 
a_{i_{k+1}-1} \oplus 
a_{i_{k+1}} - q[\Delta_{i_{k}} \oplus a_{i_{k+1}} \oplus \ldots \oplus a_{i_{k+1}}], \nonumber\\
 & = & \Delta_{i_{k+1}}
\end{eqnarray}
proving (\ref{eq:quantInduction}), which together with (\ref{eq:thcond}) ends the proof showing that 
$|\hat{s}_k|<1$ for all $k$.
\end{proof}

\section{Evaluations}
\label{s:Evaluations}
We evaluate the distribution of the quantization error 
(\ref{eq:quantization}) by means of box whisker diagrams 
for different re-initialization variants depending on the number of spikes in the spike train and different
distributions of amplitudes. Fig.~\ref{fig:QuantizationError1} shows the result for incoming spike amplitudes that are below threshold.
In this case, (\ref{eq:quantization}) holds for {\it reset-to-mod} and {\it reset-by-subtraction}. As expected, both variants behave similarly. 
For {\it reset-to-zero} the bound (\ref{eq:quantization})  only holds approximately for larger leaky parameter $\alpha$, see first row of Fig.~\ref{fig:QuantizationError1}.
For {\it reset-to-mod} and {\it reset-by-subtraction}, With increasing number of spikes one can observe a concentration of measure effect, see, e.g.,~\cite{Vershynin2018}). 

\begin{figure}[ht]
	\centering
		\includegraphics[width=0.325\textwidth]{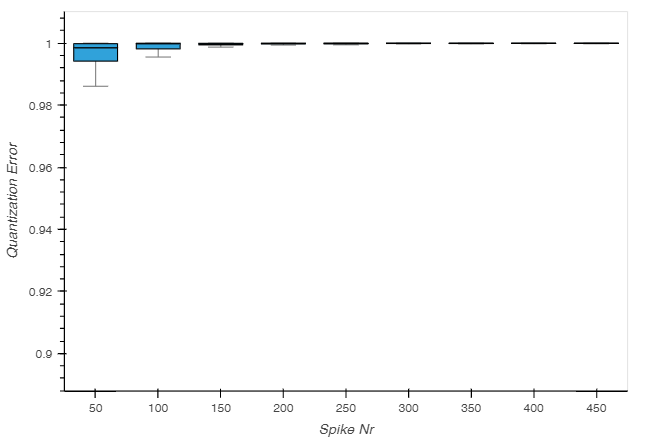}
	\includegraphics[width=0.325\textwidth]{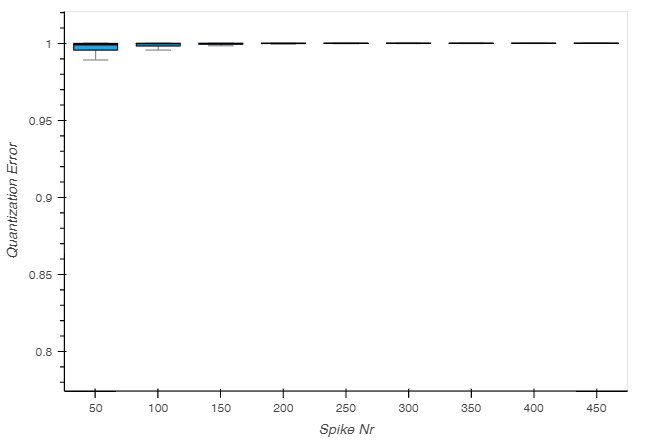}
	\includegraphics[width=0.325\textwidth]{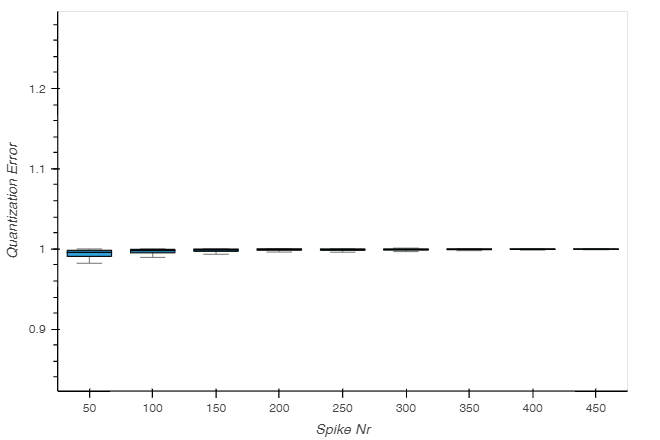}
	\includegraphics[width=0.325\textwidth]{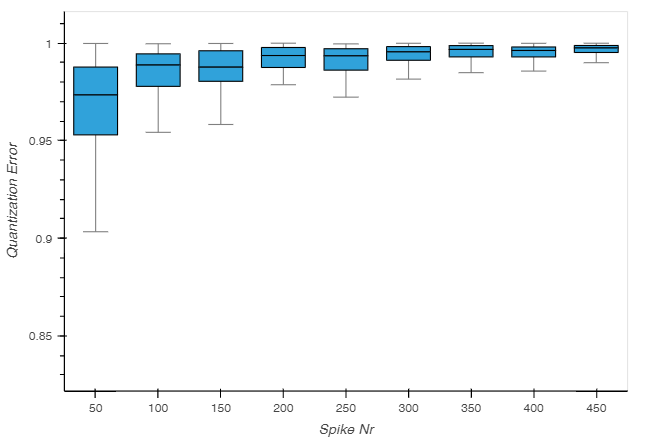}
	\includegraphics[width=0.325\textwidth]{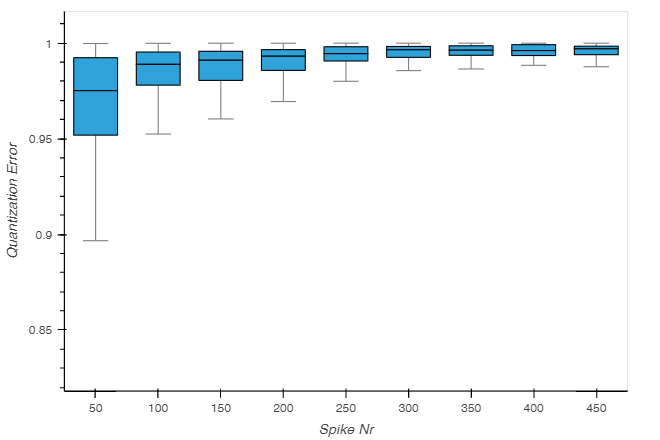}
	\includegraphics[width=0.325\textwidth]{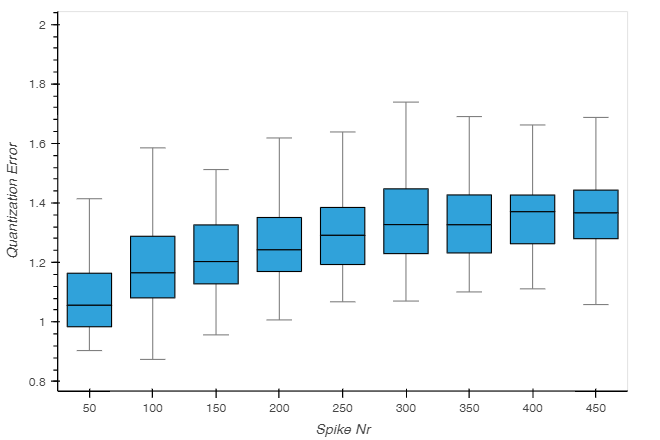}
		\caption{
		Evaluation of (\ref{eq:quantization}) for {\it reset-to-mod}, {\it reset-by-subtraction} and
		{\it reset-to-zero} (1.st/2nd/3rd column), based on spike trains with spike amplitudes in $[-\vartheta, \vartheta]$ and $100$ runs.
		The 1st row refers to $\alpha = 1$ and the 2nd row to $\alpha = 0.1$.		
		}
			\label{fig:QuantizationError1}
\end{figure}
However, as shown in Fig.~\ref{fig:QuantizationError2}, if the incoming spike amplitudes are not below threshold anymore Eqn.~(\ref{eq:quantization}) only holds for {\it reset-to-mod}. In this case the measure of concentration effect become more apparent for smaller leaky parameter. For the other variants the quantization error increases in average with the number of spikes, and the theoretical bound~(\ref{eq:quantization}), proven for {\it reset-to-mod}, does not hold anymore.
\begin{figure}[ht]
	\centering
	\includegraphics[width=0.325\textwidth]{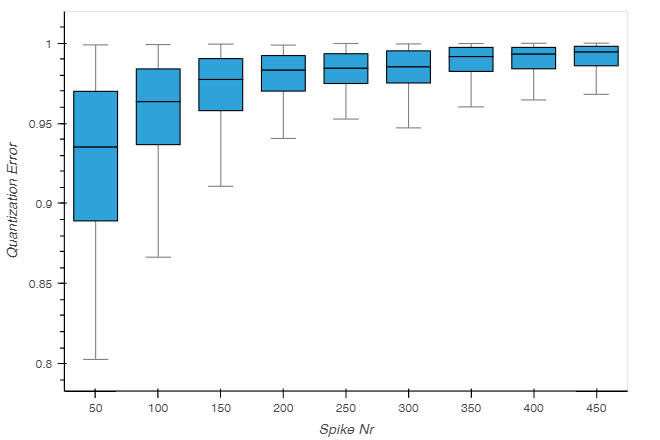}
	\includegraphics[width=0.325\textwidth]{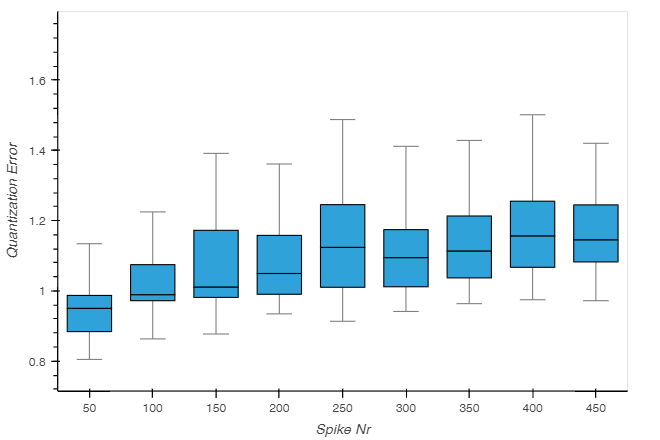}
	\includegraphics[width=0.325\textwidth]{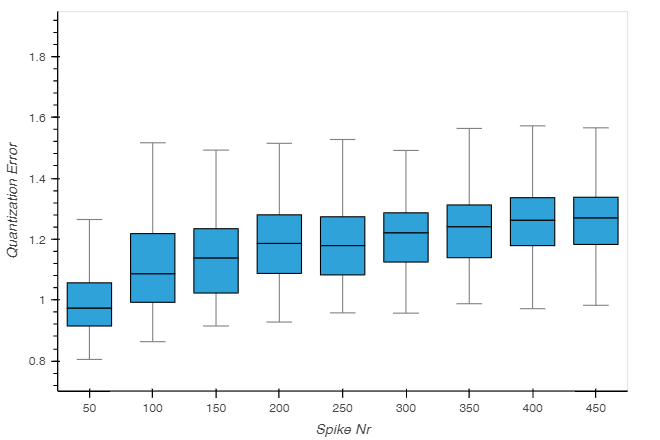}
	\includegraphics[width=0.325\textwidth]{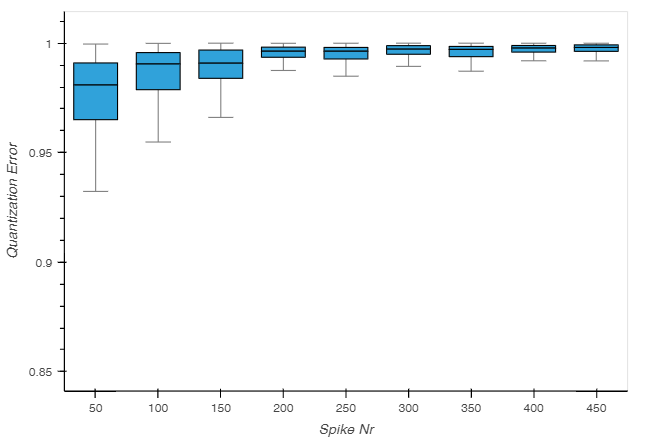}
	\includegraphics[width=0.325\textwidth]{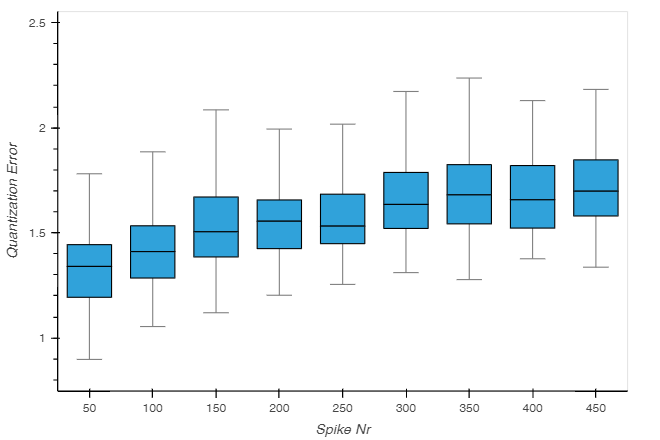}
	\includegraphics[width=0.325\textwidth]{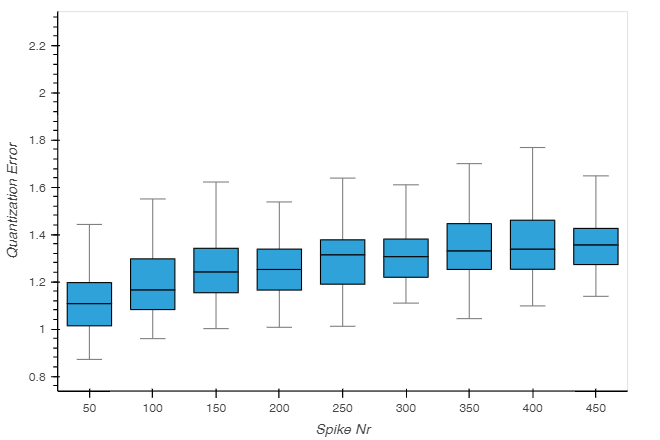}
		\caption{The same as in Fig.~\ref{fig:QuantizationError1} but with spike amplitudes in $[-3/2 \vartheta, 3/2 \vartheta]$.}
			\label{fig:QuantizationError2}
\end{figure}

\section{Conclusion}
\label{s:Conclusion}
In this paper we provide a novel view on the leaky integrate-and-fire model as quantization operator in the Alexiewicz norm.
This analysis gives rise to rethinking the re-initialization modes {\it reset-to-zero} and {\it reset-by-subtraction} that are commonly 
used in the context of spiking neural networks. These re-initialization modes only hold under restricted conditions while 
our proposed variant {\it reset-to-mod} satisfies the derived quantization bound under general conditions.
This general quantization error formula leads to new 
error bounds for LIF and SNNs, such as a quasi-isometry relation in analogy to threshold-based 
sampling~\cite{Moser2017Similarity,MoserLunglmayr2019QuasiIsometry}. 
Examples can be found in the github repository~\url{https://github.com/LunglmayrMoser/AlexSNN} and~\cite{moser2023arXiv_SNNAlexTop}. 


\section*{Acknowledgements}
This work was supported (1) by the 'University SAL Labs' initiative of Silicon Austria Labs (SAL) and its Austrian partner universities for applied fundamental research for electronic based systems, (2) by Austrian ministries BMK, BMDW, and the State of Upper-Austria in the frame of SCCH, part of the COMET Programme managed by FFG, and (3) by the {\it NeuroSoC} project funded under the Horizon Europe Grant Agreement number 101070634.

\bibliographystyle{unsrtnat}
\bibliography{references}
\end{document}